\documentclass{article}

\usepackage{arxiv}

\usepackage{amsmath,amssymb,amsthm}

\newtheorem{theorem}{Theorem}

\usepackage[utf8]{inputenc} 
\usepackage[T1]{fontenc}    
\usepackage{hyperref}       
\usepackage{url}            
\usepackage{booktabs}       
\usepackage{amsfonts}       
\usepackage{nicefrac}       
\usepackage{microtype}      
\usepackage{cleveref}       
\usepackage{lipsum}         
\usepackage{graphicx}
\usepackage{natbib}
\usepackage{doi}

\usepackage{tikz}
\usetikzlibrary{arrows.meta, positioning, shapes.geometric, shapes.symbols, fit, backgrounds, calc}
\usepackage{adjustbox}
\usepackage{multirow}
\usepackage{subcaption}

\usepackage{makecell}

\usepackage[ruled,vlined,linesnumbered]{algorithm2e}
\newcommand{\Select}{\textsc{Select}}
\newcommand{\Decide}{\textsc{Decide}}
\newcommand{\Deduce}{\textsc{Deduce}}
\newcommand{\GetObservation}{\textsc{GetObservation}}
\newcommand{\GetBranchingScore}{\textsc{GetBranchScore}}
\newcommand{\cex}{cex}
\newcommand{\sat}{sat}
\newcommand{\unsat}{unsat}

\newcommand{\problems}{queue}
\SetKwInOut{Input}{input}
\SetKwInOut{Output}{output}
\SetKw{Break}{break}
\SetKw{Continue}{continue}
\SetKw{In}{in}

\makeatletter

\renewcommand{\sectionautorefname}{\S\@gobble}
\renewcommand{\subsectionautorefname}{\S\@gobble}
\renewcommand{\subsubsectionautorefname}{\S\@gobble}
\def\appendixautorefname{\S\@gobble}%

\makeatother

\newcommand{\babverifier}{\textsc{BaB$_{\text{NNV}}$}}
\newcommand{\babsr}{\textsc{BaBSR}}
\newcommand{\neuralsat}{\textsc{NeuralSAT}}
\newcommand{\marabou}{\textsc{Marabou}}
\newcommand{\crown}{\textsc{$\alpha\beta$-Crown}}
\newcommand{\tool}{\textsc{Rsb}}
\newcommand{\fsb}{\textsc{Fsb}}
\newcommand{\polarity}{\textsc{Polarity}}
\newcommand{\random}{\textsc{Random}}
\newcommand{\gcpcrown}{\textsc{Gcp-Crown}}
\newcommand{\upb}{\textsc{Upb}}

\newtoggle{usecomment}
\settoggle{usecomment}{false}

\newcommand{\eg}{\emph{e.g.}}

\title{{\Large Verifying Neural Networks with Reinforcement Learning}}

\author{Hai Duong \\
	Department of Computer Science \\
	George Mason University \\
        Fairfax, VA, USA \\
	\And
	Thanh Le \\
	Unaffiliated \\
	Yokosuka, Japan \\
	\And
	ThanhVu Nguyen \\
	Department of Computer Science \\
	George Mason University \\
        Fairfax, VA, USA \\
}

\date{}

\renewcommand{\undertitle}{}
\renewcommand{\shorttitle}{Verifying Neural Networks with Reinforcement Learning}

\hypersetup{
  pdftitle={Verifying Neural Networks with Reinforcement Learning},
  pdfsubject={q-bio.NC, q-bio.QM},
  pdfauthor={Hai Duong, Thanh Le, ThanhVu Nguyen},
  pdfkeywords={First keyword, Second keyword, More},
}

\begin{document}

\maketitle

\begin{abstract}
  Formal verification can play a key role in ensuring the reliability of Deep Neural Networks (DNNs) deployed in safety-critical systems.
  Modern DNN verifiers employ a branch-and-bound framework,
  which alternates between branching (splitting into smaller subproblems) and
  bounding (pruning subproblems) to efficiently explore the verification space.
  However, existing branching heuristics make greedy decisions based on static scoring functions.
  They do not anticipate long-term efficiency or leverage the growing availability of verification data to improve performance.

  This work introduces \tool{}, a reinforcement learning framework that learns to refine baseline branching heuristics.
  It trains an actor-critic architecture to maximize cumulative future rewards rather than immediate scores.
  The actor generates attention weights from observations of raw neuron features and learned graph embeddings,
  which rescale baseline heuristic scores to guide neuron branching.
  Evaluation on 600 challenging instances demonstrates that \tool{} consistently outperforms state-of-the-art branching heuristics,
  solving 11\% more instances while reducing branch exploration by 50\%.
\end{abstract}

\section{Introduction}
  \label{sec:intro}
  Deep Neural Networks (DNNs) are increasingly being employed as components of mission-critical systems across a range of domains, \eg, autonomous driving~\cite{shao2023safety} or medicine~\cite{morris2023deep}.
  Thus, DNNs require high levels of assurance to be deployed with confidence in such applications.
  To provide evidence that DNN behavior meets expectations,
  researchers have developed techniques for verifying that DNNs
  satisfy required specifications~\cite{shriver2021reducing,duong2026verifying,duong2026verifying2}.
  Dozens of DNN verifiers have been reported in the
  literature, and a yearly competition has documented advances in their capabilities~\cite{brix2023first,brix2024fifth,kaulen20256th}.


  As with traditional software verification, DNN verification is often framed as a satisfiability problem and also suffers from scalability challenges~\cite{katz2017reluplex,duong2026compositional}.
  To tackle the large search space of DNN verification, modern verifiers~\cite{ferrari2022complete,wang2021beta,duong2024harnessing,wu2024marabou} often employ the Branch-and-Bound (BaB)~\cite{bunel2020branch} approach, which alternates between \emph{bounding} (employing abstraction methods to prune branches) and \emph{branching} (selecting neurons to split). Most advances in DNN verification focus on developing novel abstract domains to tighten the bounds computed during the bounding step~\cite{zhang2018efficient,wang2018formal,singh2019abstract,singh2018fast,tran2019star,xu2020automatic,xu2020fast}.

  In contrast, the branching step---which determines the order of neuron splits and therefore significantly impacts BaB verifier efficiency---is under-explored.
  State-of-the-Art (SOTA) DNN verifiers often adopt branching heuristics like \babsr{}~\cite{bunel2020branch}
  and \fsb{}~\cite{de2021improved} with hand-crafted scoring functions to determine the branching order.
  These heuristics have several major limitations:
  (i) they require specialized expertise to design for different problem characteristics (\eg, different activation functions);
  (ii) they make greedy decisions at each step without anticipating long-term search efficiency; and
  (iii) they fail to leverage the growing availability of verification data to improve the performance.

  This work introduces \tool{}, a Deep Reinforcement Learning (DRL) approach that addresses these limitations by learning adaptive branching policies from verification data.
  First, \tool{} automatically learns to refine any baseline branching heuristic such as \fsb{} by training on diverse verification instances.
  Second, \tool{} formulates branching as a sequential decision-making problem~\cite{sutton1998reinforcement} and trains a network~\cite{bello2017neural} via DRL~\cite{mnih2016asyncchronous,haarnoja2018soft} to maximize cumulative future rewards (\eg, negation of numbers of branches), which enables long-term search efficiency optimization instead of greedy local decisions.
  Third, by leveraging available verification benchmarks~\cite{brix2024fifth,kaulen20256th} and the ease of synthesizing additional training data, \tool{} follows the paradigm of modern ML where performance improves with scale as demonstrated by large language models~\cite{kaplan2020scaling}.
  Finally,  \tool{} serves as a general architecture-agnostic framework that can enhance any existing branching heuristic across diverse DNN types and activation functions by combining learned graph embeddings (to encode network structure) and raw neuron features (\eg, neuron bounds, to capture properties/specifications).

  Evaluation on 600 challenging instances across networks of varying sizes and architectures shows that
  \tool{} consistently outperforms SOTA branching heuristics such as \fsb{} by solving 11\% more instances and reducing branch exploration by 50\%.
  Furthermore, \tool{} generalizes beyond its training distribution, solving the most instances on unseen network architectures (\eg, CNN) and activation function (\eg, Sigmoid).
  We consider these improvements substantial given the maturity of existing heuristics like \fsb{}, which have been widely adopted by leading verifiers and represent the culmination of years of research.


  Our contributions are:
  (i) a DRL formulation that learns to refine  existing baseline branching heuristic through attention weights (thereby enabling architecture-agnostic policy learning); 
  (ii) an architecture that handles variable-sized inputs as the number of unstable neurons varies across instances and shrinks with each BaB split;
  (iii) a training procedure adapted to BaB's branching dynamics that accounts for both successor subproblems per branching decision;
  and (iv) an implementation of \tool{} using an existing DNN verifier and an evaluation showing the effectiveness of the approach.

\section{Preliminaries}
  \label{sec:preliminary}
  \noindent\textbf{DNN verification.}
    Given a DNN $N$ and a property $\phi$, the \emph{DNN verification problem} asks if $\phi$ is a valid property of $N$.
    Typically, $\phi$ is a formula of the form $\phi_{in} \Rightarrow \phi_{out}$, where $\phi_{in}$ is a property over the inputs of $N$ and $\phi_{out}$ is a property over the outputs of $N$.
    Modern techniques often treat the DNN verification as a \emph{satisfiability} problem~\cite{wang2021beta,ferrari2022complete,duong2023dpllt,wu2024marabou,duong2024harnessing}.
    Specifically, given a formula $\alpha$ representing $N$ and the formula $\phi_{in}\Rightarrow \phi_{out}$ representing the desired property, a DNN verifier checks the satisfiability of the formula:
    \(
    \alpha \land \phi_{in} \land \overline{\phi_{out}}
    \).
    The verifier returns \texttt{unsat} if the formula is unsatisfiable, indicating $\phi$ is a valid property of $N$, and \texttt{sat} otherwise, indicating $\phi$ is not valid.
    An \texttt{unsat} result can further be accompanied by a proof, \eg, derived from the BaB search tree, that can be independently checked~\cite{duong2026generating}.

  Modern DNN verifiers often follow a branch-and-bound (BaB) framework~\cite{bunel2020branch}, shown in
    \autoref{alg:babnnv} in~\autoref{apdx:bab_algorithm}). BaB iterates between two components: \Decide{} (branching) assigns active/inactive status for a neuron and thereby splits the problem into two subproblems, and \Deduce{} (bounding) computes the bounds of neuron outputs and checks the feasibility of the current subproblem. Both components are critical to verification efficiency, but this work focuses on improving \Decide{}, which is responsible for branching decisions and his relatively under-explored compared to \Deduce{} (e.g., designing new abstract domains~\cite{singh2018fast,singh2019abstract,tran2019star,wang2018formal,xu2020automatic,xu2020fast,zhang2022general,duong2024harnessing,ferrari2022complete}).

  \noindent\textbf{Branching (\Decide{}) heuristics.}
    At each verification step, selecting which neuron to branch on from the set of all neurons constitutes a combinatorial optimization problem.
    The choice of which neuron to split affects how quickly the search space is explored and pruned, \eg, determining the numbers of steps and the verification time.
    A well-designed decision heuristic can dramatically reduce the search space, while a poor one may lead to exponential growth in numbers of subproblems.

    SOTA heuristic \fsb{}~\cite{de2021improved} estimates improvement for each neuron if it is branched on, and is widely used by leading DNN verifiers. 
    However, \fsb{} only considers local optimality at each step, not long-term search efficiency, and relies on domain expert knowledge, \eg, Lagrangian computations.
    In contrast, \tool{} learns to generate weights that modify baseline heuristic's scores (\eg, \fsb{}) to optimize long-term verification efficiency, thus,
    produces more effective decisions overall (see~\autoref{sec:results}).
  \noindent\textbf{Deep Reinforcement Learning.}
    DRL combines Reinforcement Learning (RL) with DNN to learn sequential decision-making policies from interaction with an environment.
    An RL problem is formalized as a Markov Decision Process $\langle S, A, P, R \rangle$, where $S$ is the state space, $A$ the action space, $P$ the transition function, and $R(s, a)$ the reward function.
    A DRL agent learns a policy to maximize the expected cumulative reward over time.
    This work uses an actor-critic algorithm~\cite{haarnoja2018soft}, which maintains two networks: an actor that selects actions and a critic that estimates their value.
    The actor is updated to take actions with higher estimated value, while the critic is updated to better predict future rewards.
    We refer to \autoref{apdx:drl} for a detailed actor and critic objectives.

\section{Motivating example}\label{sec:motiv}

  \begin{figure}[t]
    \centering
    \begin{subfigure}[t]{0.44\linewidth}
      \centering
      \includegraphics[width=\linewidth]{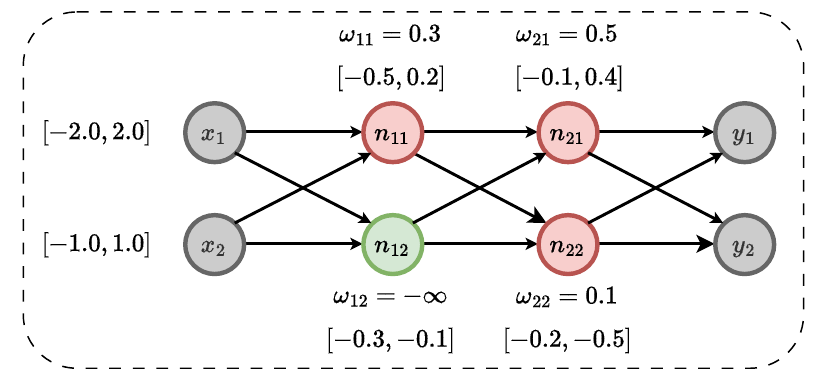}
      \caption{A DNN with neuron bounds computed by an abstraction and heuristic scores $\omega_{ij}$ computed by \fsb{}.
      }
      \label{fig:example:dnn}
    \end{subfigure}%
    \hfill
    \begin{subfigure}[t]{0.25\linewidth}
      \centering
      \includegraphics[width=\linewidth]{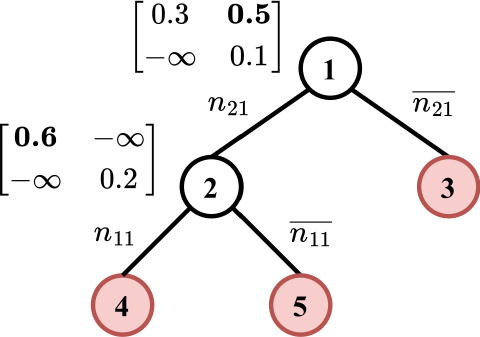}
      \caption{A search tree with \fsb{}}
      \label{fig:example:score}
    \end{subfigure}%
    \begin{subfigure}[t]{0.30\linewidth}
      \centering
      \includegraphics[width=\linewidth]{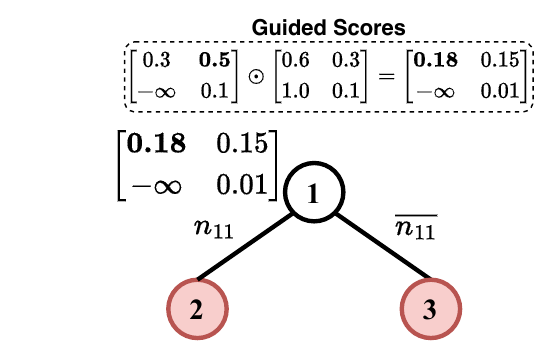}
      \caption{A search tree with \tool{}}
      \label{fig:example:score-guided}
    \end{subfigure}%
    \caption{Comparison of different search trees performed by \fsb{} and \tool{} for verifying the property in~\autoref{eq:example:property}.}
    \label{fig:example}
  \end{figure}

  We illustrate \tool{} using a concrete example.
  Consider the DNN $N$ shown in~\autoref{fig:example:dnn}, which consists of two linear layers with ReLU activations.
  We aim to verify the property:
  \begin{equation}\label{eq:example:property}
    \phi \equiv
    (-2 \le x_1 \le 2 \land -1 \le x_2 \le 1) \Rightarrow (y_1 > y_2)
  \end{equation}
  \babverifier{} computes a bound estimation of hidden neurons (\eg, using an abstraction as \textsc{LiRPA}~\cite{xu2020automatic}) to determine neuron stability.
  This step reveals three \emph{unstable} ReLU neurons (shown in red in~\autoref{fig:example:dnn}): $n_{11}$, $n_{21}$, and $n_{22}$.
  For example, $n_{11}$ is unstable because its bounds $[-0.5, 0.2]$ mean it could be either active or inactive depending on the input.
  \babverifier{} splits (branches) each unstable neuron into positive (active) and negative (inactive) branches.


  SOTA \babverifier{} employs various branching heuristics to guide the search, \eg, \fsb{}~\cite{de2021improved}.
  A branching heuristic assigns scores $\omega$ to unstable neurons
  to prioritize which neuron to split next.
  \autoref{fig:example:score} shows a search tree of verifying the property in~\autoref{eq:example:property} using \babverifier{} with \fsb{}, a popular heuristic used by major DNN verifiers including \crown{}~\cite{wang2021beta}, \gcpcrown{}~\cite{zhang2022general} and \neuralsat{}~\cite{duong2024harnessing}.

  First, \babverifier{} computes \fsb{} scores for unstable neurons as $\omega_{11} = 0.3$, $\omega_{21} = 0.5$, and $\omega_{22} = 0.1$
  and selects $n_{21}$ for the initial branching because it has the highest score.
  After this branching, it proves unsatisfiability of the negative branch $\overline{n_{21}}$ while the positive branch is still undecided.
  Next, \babverifier{} computes \fsb{} scores for the remaining two unstable neurons and obtains $\omega_{11} = 0.6$ and $\omega_{22} = 0.2$.
  and selects $n_{11}$ for the second branching decision based on its score.
  \babverifier{} then can prove both branches and completes the verification.
  In total, \babverifier{} with \fsb{} requires 2 steps and processes 4 branches to verify the property.

  \tool{} improves this by learning to generate new weights to adjust the original \fsb{} scores.
  Neuron-level features are constructed from four available \emph{raw features}
  of unstable neurons $n_{11}$, $n_{21}$, $n_{22}$:
  lower bound, upper bound, bias ($0.0$ for simplicity), and mask (1 for unstable, 0 for stable):
  \begin{equation}\label{eq:observation_construction}
    \mathbf{x} = \GetObservation\left(
      \begin{bmatrix}
        -0.5 \\
        0.2 \\
        0.0 \\
        1.0 \\
      \end{bmatrix},
      \begin{bmatrix}
        -0.1 \\
        0.4 \\
        0.0 \\
        1.0 \\
      \end{bmatrix},
      \begin{bmatrix}
        -0.2 \\
        0.5 \\
        0.0 \\
        1.0 \\
      \end{bmatrix}
    \right)
  \end{equation}
  $\GetObservation{}$ combines neuron features with Graph Neural Networks (GNN) embeddings to produce observation $\mathbf{x}$ (see~\autoref{sec:training:embed}).
  Next, the actor $\pi_\psi$ takes as input $\mathbf{x}$ and generates weights:
  \begin{equation}\label{eq:weight_generation}
    \begin{bmatrix} a_{n_{11}} \\ a_{n_{21}} \\ a_{n_{22}} \end{bmatrix} = \pi_\psi\left(\mathbf{x}\right) = \begin{bmatrix}
      0.6 \\
      0.3 \\
      0.1
    \end{bmatrix}
  \end{equation}
  The precomputed \fsb{} scores $\omega_{ij}$ are then scaled by the weights $a_{ij}$ to produce the guided scores:
  \begin{equation}\label{eq:score_multiplication}
    \begin{bmatrix} \omega'_{11} \\ \omega'_{21} \\ \omega'_{22} \end{bmatrix}
    = \begin{bmatrix} a_{n_{11}} \\ a_{n_{21}} \\ a_{n_{22}} \end{bmatrix} \odot \begin{bmatrix} \omega_{11} \\ \omega_{21} \\ \omega_{22} \end{bmatrix}
    = \begin{bmatrix} 0.6 \\ 0.3 \\ 0.1 \end{bmatrix} \odot \begin{bmatrix} 0.3 \\ 0.5 \\ 0.1 \end{bmatrix}
    = \begin{bmatrix} 0.18 \\ 0.15 \\ 0.01 \end{bmatrix}
  \end{equation}
  This reweighting flips the priority, \eg, $n_{11}$ now has the highest score ($\omega'_{11} = 0.18$) instead of $n_{21}$ ($\omega'_{21} = 0.15$).
  As shown in~\autoref{fig:example:score-guided}, branching on $n_{11}$ first immediately proves both branches (positive $n_{11}$ and negative $\overline{n_{11}}$ are UNSAT), completing the verification in just one step (2 branches) instead of two steps (4 branches) when using \fsb{}.

  \noindent\textbf{Key intuition.} By learning from existing DNN verification tasks, \tool{} determines that certain neurons tend to be more effective branching points than others.
  For example, branching on $n_{11}$ is preferred over $n_{21}$ because $n_{11}$ lies in an earlier layer and has a wider pre-activation interval, causing its split to simultaneously stabilize multiple downstream neurons.
  In particular, branching on $n_{11}$ constrains both $n_{21}$ and $n_{22}$ in both subproblems, leading to tighter intermediate bounds that allow verification to terminate quicker.
  In contrast, $n_{21}$ (chosen by \fsb{}) is only weakly coupled to $n_{22}$; branching on it leaves $n_{22}$ unstable, requiring additional splits.
  Through training on many networks and properties, \tool{} learns this recurring pattern and selects neurons that are more strongly connected,
  even if their immediate heuristic scores (\eg, computed by \fsb{}) are not maximal.

\section{Branching as DRL environment}
  \label{sec:propose}

  We map the branching problem in \babverifier{} to a DRL environment~\cite{sutton1998reinforcement}, defining a state space $\mathcal{S}$, observation space $\mathcal{O}$, action space $\mathcal{A}$, and reward function $R$.

  \noindent\textbf{State space.}
  Each state $\mathbf{s} \in \mathcal{S}$ comprises the verification instance $\langle N, \phi_{in}, \phi_{out} \rangle$ and the current assignment $\sigma$ encoding accumulated branching decisions.
  When \babverifier{} branches on neuron $n_{ij}$, it generates two successor states:
  \begin{align}
    \mathbf{s}' &= \langle N, \phi_{in}, \phi_{out}, \sigma \land n_{ij} \rangle \quad \text{(positive branch)} \\
    \bar{\mathbf{s}}' &= \langle N, \phi_{in}, \phi_{out}, \sigma \land \overline{n_{ij}} \rangle \quad \text{(negative branch)}
  \end{align}
  where $n_{ij}$ and $\overline{n_{ij}}$ denote the neuron being active and inactive, respectively.
  This dual-branch structure means each branching decision expands the search tree exponentially, which distinguishes DNN verification from single-successor combinatorial optimization problems.

  \noindent\textbf{Observation space.}
  The full state $\mathbf{s}$ cannot be directly processed by the DRL agent due to its complex, variable-sized structure.
  We therefore define an observation space $\mathcal{O}$ and an extraction function \GetObservation{} that maps $\mathbf{s}$ to a compact, agent-compatible representation satisfying three criteria:
  (1) \emph{generalizability} across DNN architectures,
  (2) \emph{expressivity} to capture neuron interdependencies, and
  (3) \emph{scalability} by attending only to unstable neurons.

  Formally, $\mathcal{O} \equiv \mathbb{R}^{|\mathcal{N}_u| \times (F + E)}$, where $\mathcal{N}_u$ is the set of unstable neurons, and $F$, $E$ are the raw feature and embedding dimensions.
  For each unstable neuron $n_{ij}$, raw features include the activation bounds $(l_{ij}, u_{ij})$, bias $b_{ij}$, and a binary mask $m_{ij}$ indicating whether the neuron has been branched on previously.
  These local features are augmented with learned embeddings from a GNN that encodes the global structure of the verification problem (\autoref{sec:training:embed}).

  \noindent\textbf{Action space.}
  Existing branching heuristics such as \babsr{} and \fsb{} rank unstable neurons by computing a score $\omega_{ij}$ for each and greedily selecting $n^* = \text{argmax}_{ij}\,\omega_{ij}$, yielding locally optimal decisions.
  \tool{} improves on this by learning to adjust these scores rather than replacing them.
  We define the action space as $\mathcal{A} \equiv [0, 1]^{|\mathcal{N}_u|}$, where each $a_{ij} \in [0, 1]$ is a learned weight for neuron $n_{ij}$.
  Given observation $\mathbf{o}$, the actor samples $\mathbf{a} \sim \pi_\psi(\mathbf{a}|\mathbf{o})$ and selects:
  \begin{equation}\label{eq:guided}
    n^* = \underset{i, j}{\text{argmax }} \, \omega_{ij} \odot a_{ij}
  \end{equation}
  This allows the agent to refine any heuristic's priorities based on the current state and promote long-term branching efficiency.

  \noindent\textbf{Reward function.}
  The reward function $R(\mathbf{s}, \mathbf{a})$ incentivizes efficient verification by penalizing unverified subproblems.
  Formally, the agent incurs a unit reward for each resolved subproblem:
  \begin{equation}
    R(\mathbf{s}, \mathbf{a}) = \sum_{\sigma' \in \{\sigma \wedge n^*, \sigma \wedge \overline{n^*}\}} \neg \textsc{Deduce}(N, \phi_{in}, \phi_{out}, \sigma')
  \end{equation}
  where $\langle N, \phi_{\text{in}}, \phi_{\text{out}}, \sigma \rangle \leftarrow \mathbf{s}$, and $n^*$ is selected neuron obtained from $\mathbf{a}$ (\autoref{eq:guided}).
  Since each verified subproblem contributes a $+1$ reward, maximizing the cumulative discounted reward is equivalent to minimizing the total number of subproblems generated throughout the verification process.
  The overall goal is to learn a policy $\pi_\psi(\mathbf{a}|\mathbf{o})$ that maximizes the expected cumulative discounted reward $\mathbb{E}_{\pi_\psi}\bigl[\sum_{t=0}^T \gamma^t R(\mathbf{s}_t, \mathbf{a}_t)\bigr]$, where $\gamma$ is the discount factor.



  \begin{theorem}[Sound and Complete]
    \label{thm:soundness-completeness}
    \tool{} preserves \babverifier{}'s soundness and completeness.
  \end{theorem}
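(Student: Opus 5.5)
The argument rests on one observation: \tool{} changes only \emph{which} unstable neuron \Decide{} returns, never \emph{how} the returned neuron is split or how subproblems are bounded and pruned. So the plan is to make explicit the facts about \babverifier{} on which its soundness and completeness rely, and to check that each fact holds for any selection rule that outputs an unstable, not-yet-branched neuron. \Cref{eq:guided} is one such rule.

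\textbf{Soundness.} Take an arbitrary \babverifier{} run with any decision procedure. Suppose it returns \texttt{unsat}. Every leaf of the search tree was then declared infeasible by \Deduce{}. Each internal node with assignment $\sigma$ is split into $\sigma\wedge n^*$ and $\sigma\wedge\overline{n^*}$, and these two cover every input consistent with $\sigma$: a ReLU is either active or inactive. By induction over the tree, the leaves cover the whole input region $\phi_{in}$. Because \Deduce{} is a sound over-approximation, infeasibility of every leaf implies that $\alpha\wedge\phi_{in}\wedge\overline{\phi_{out}}$ is unsatisfiable. Suppose instead the run returns \texttt{sat}. The counterexample comes from \Deduce{} or from a concrete check, which does not depend on $n^*$. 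Neither case uses the identity of $n^*$, only that it is unstable under $\sigma$. For \tool{}, $n^*$ is an argmax of $\omega_{ij}\odot a_{ij}$ taken over the current unstable neurons $\mathcal{N}_u$ with $a_{ij}\in[0,1]$. So the same branching and bounding machinery runs unchanged, and soundness carries over.

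\textbf{Completeness (termination with a correct answer).} There are finitely many ReLU neurons, and each branching decision fixes the phase of one neuron not previously fixed on that path. Every root-to-leaf path therefore has length at most the number of neurons, and the tree is finite. At a fully assigned leaf, all neurons are stable, so the network restricted to that leaf is linear. There \Deduce{} is exact (an LP over a polytope), and it decides the leaf exactly, as in standard \babverifier{}. Hence the search terminates with the correct verdict for any decision rule that picks unstable, unassigned neurons.

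The main obstacle is ties and zero weights. If $a_{ij}=0$, or if all guided scores $\omega_{ij}\odot a_{ij}$ vanish, the argmax is degenerate. The proof must also ensure that the argmax is never taken over already-branched neurons, so that no neuron is branched twice on the same path. I would fix this by restricting the argmax to $\mathcal{N}_u$ under the current $\sigma$ (via the mask $m_{ij}$) and resolving ties by any deterministic rule, such as the baseline order. Then $n^*$ is always a legal unstable neuron, and the two arguments above apply. Stochastic sampling of $\mathbf{a}$ from the actor does not affect this, since the properties hold for every realization of $\mathbf{a}$.
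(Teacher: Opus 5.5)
Your proposal is correct and takes essentially the same approach as the paper. The paper's proof consists only of the observation that \tool{} changes nothing but which neuron is chosen via \autoref{eq:guided}, plus the claim that \babverifier{} explores all necessary branches regardless of branching order. You make that second claim rigorous (the covering argument for soundness, and finite path length with exact \Deduce{} on fully fixed leaves for completeness), and you spell out a side condition the paper leaves implicit: the argmax must range over the currently unstable, unbranched neurons, with ties and zero weights broken arbitrarily, for every sampled $\mathbf{a}$.
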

  \begin{proof}
    \tool{} only affects the neuron selection order via \autoref{eq:guided}, leaving the underlying verification logic of \babverifier{} intact.
    Since \babverifier{} exhaustively explores all necessary branches regardless of branching order, the verification result is unchanged.
  \end{proof}

\section{The \tool{} approach}
  \begin{figure}
    \begin{minipage}[t]{0.48\linewidth}
  \begin{algorithm}[H]
    \small
    \Input{DNN $N$, property $\phi$, trained actor $\pi_\psi$}
    \Output{\unsat{} if property is valid, otherwise \sat{}.}

    \BlankLine

    $\problems \gets \{ \emptyset \}$ \\
    \While{$\problems$}{
      $\sigma \gets \Select(\problems)$ \\
      \If{\Deduce($N, \phi, \sigma$)}{
        $\mathbf{o}_t \gets \GetObservation(N, \phi, \sigma)$ \label{line:get-observation} \\
        $\mathbf{a}_t \sim \pi_\psi(\mathbf{a}|\mathbf{o}_t)$ \label{line:drl-decide} \\
        $\omega_t \gets \GetBranchingScore(N, \phi, \sigma)$ \label{line:get-branching-score}\\
        $n^* \gets \arg\max_{ij} \mathbf{a}_t \odot \omega_t$ \label{line:select-neuron} \\
        $\problems \gets \problems \cup \{ \sigma \land n^* ~;~ \sigma \land \overline{n^*} \}$ \\
      }
    }
    \Return{\unsat{}}
    \caption{\babverifier{} with \tool{}}
    \label{alg:inference}
  \end{algorithm}
  \end{minipage}%
  \hfill
  \begin{minipage}[t]{0.48\linewidth}

    \begin{algorithm}[H]
      \small
      \caption{Training \tool{}}
      \label{alg:training}
      \Input{Training set $\{(N_i, \phi_i)\}$} 
      \Output{Trained actor $\pi_{\psi}$, critics $Q_{\phi_1}, Q_{\phi_2}$}

      $\mathcal{D} \gets \emptyset$ \\
      \For{$e ~\In~ [1, ..., \text{NumEpisodes}]$}{
        $(N, \phi) \gets \textsc{SampleInstance}(\{(N_i, \phi_i)\})$ \label{line:train-start} \\
        $\mathbf{o}_0 \gets \GetObservation(N, \phi)$ \\ \label{line:training:get-observation}
        \For{$t\;\In\;[0, ..., T]$}{
          $\omega_t \gets \GetBranchingScore(N, \phi_i, \sigma)$ \\
          $\mathbf{a}_t \sim \pi_{\psi}(a|\mathbf{o}_t)$ \\ \label{line:training:attention-weight}
          $n^* \gets \arg\max_{ij} \mathbf{a}_t \odot \omega_t$ \\ \label{line:training:decision}
          $\mathbf{o}'_{t+1}, \bar{\mathbf{o}}'_{t+1} \gets \Decide(n^*)$ \\
          $m_{t+1}, \bar{m}_{t+1} \gets \Deduce(\mathbf{o}'_{t+1}, \bar{\mathbf{o}}'_{t+1})$ \\
          $r_t \gets -|\{\mathbf{o}' : m' = 1\}|$ \\ \label{line:training:reward}
          $\mathcal{D}.add(\{(\mathbf{o}_t, \mathbf{a}_t, r_t, \mathbf{o}'_{t+1}, \bar{\mathbf{o}}'_{t+1}, m_{t+1}, \bar{m}_{t+1})\})$ \\ \label{line:training:store-transition}
          $\pi_{\psi}, Q_{\phi_1}, Q_{\phi_2} \gets \textsc{Optimize}(\mathcal{D})$ \label{line:update-sac} \\
        }
      }
      \Return{$\pi_{\psi}, Q_{\phi_1}, Q_{\phi_2}$}
    \end{algorithm}

  \end{minipage}
  \end{figure}

  \autoref{alg:inference} shows how \tool{} integrates into \babverifier{}.
  At each branching step, \GetObservation{} extracts the current observation $\mathbf{o}_t$ (\autoref{line:get-observation}), the actor $\pi_\psi$ generates attention weights $\mathbf{a}_t$ (\autoref{line:drl-decide}), and \GetBranchingScore{} computes baseline scores $\omega_t$ for unstable neurons (\autoref{line:get-branching-score}).
  The neuron maximizing the guided score $\arg\max_{ij} \mathbf{a}_t \odot \omega_t$ is selected for branching (\autoref{line:select-neuron}); the rest of the procedure is unchanged from \babverifier{}.

  \tool{} primarily relies on three components: a GNN that enriches raw neuron features with structural embeddings (\autoref{sec:training:embed}), an actor-critic architecture that learns to refine branching scores (\autoref{sec:training:architecture}), and an off-policy training procedure with modifications for the dual-branch structure of \babverifier{} (\autoref{sec:training:procedure}).

  \subsection{Observation construction}
  \label{sec:training:embed}

  We train a Graph Convolution Networks (GCN)~\cite{kipf2017semisupervised} to enrich raw neuron features with learned embeddings that capture network-wide structural patterns.
  The graph is constructed with all neurons as nodes, activation bounds, bias, and branching status ($l_{ij}, u_{ij}, b_{ij}, m_{ij}$) as node features, network connections as edges forming adjacency matrix $A$, and heuristic scores (\eg, \fsb{}) as supervision labels for unstable neurons.
  This yields a node regression problem where the GCN predicts branching scores for unstable neurons $\mathcal{N}_u$, producing embeddings $\mathbf{o}_{\text{embed}} \in \mathbb{R}^{|\mathcal{N}_u| \times E}$.

  Since the raw feature dimension $F \ll E$, directly concatenating raw features with embeddings would bias representations toward global information.
  We therefore project raw features $\mathbf{o}_{\text{raw}} \in \mathbb{R}^{|\mathcal{N}_u| \times F}$ to dimension $E$, concatenate with $\mathbf{o}_{\text{embed}}$, and project again to produce the unified observation $\mathbf{x} \in \mathbb{R}^{|\mathcal{N}_u| \times E}$ that balances local and global information.

  \subsection{Actor-Critic architecture}
  \label{sec:training:architecture}

    We employ an actor-critic~\cite{haarnoja2018soft} framework for its sample efficiency~\cite{wen2021characterizing,tan2025actorcritics}, which is important since each verification episode is computationally expensive~\cite{katz2017reluplex}.
    Both actor $\pi_\psi(\mathbf{a}|\mathbf{o})$ and critics $Q_\phi(\mathbf{o},\mathbf{a})$ use PointerNet~\cite{bello2017neural} architectures that naturally handle variable-length inputs (the number of unstable neurons changes across instances), taking the unified observation $\mathbf{x}$ from \autoref{sec:training:embed} as input.

    The actor (\autoref{fig:training:actor}) uses an encoder-decoder with an attention-based pointer mechanism.
    An encoder processes $\mathbf{x}$ to produce reference vectors $\mathbf{r} \in \mathbb{R}^{|\mathcal{N}_u| \times H}$ capturing relationships among all neurons.
    The decoder, initialized with a learnable state $\mathbf{x}_0^{\text{dec}} \in \mathbb{R}^E$, generates query $\mathbf{q} \in \mathbb{R}^{H}$, which PointerNet uses to compute attention scores output as $\mathbf{a} \in \mathbb{R}^{|\mathcal{N}_u|}$.
    The feature of the selected neuron becomes the next decoder state, enabling decisions that condition on prior selections.

    The critic (\autoref{fig:training:critic}) mirrors the actor's encoder but differs in decoder initialization.
    Instead of a learnable state, the decoder is initialized with the actor's decoder state $\mathbf{x}_t^{\text{dec}}$ from the current branching step.
    This produces Q-values $Q(\mathbf{o}, \mathbf{a}) \in \mathbb{R}^{|\mathcal{N}_u|}$ for all neurons simultaneously.
    Twin critics $Q_{\phi_1}$, $Q_{\phi_2}$ share this architecture with independent parameters to reduce variance in value estimation.

    \begin{figure}[t]
      \begin{subfigure}{0.50\linewidth}
        \centering
        \includegraphics[width=\linewidth]{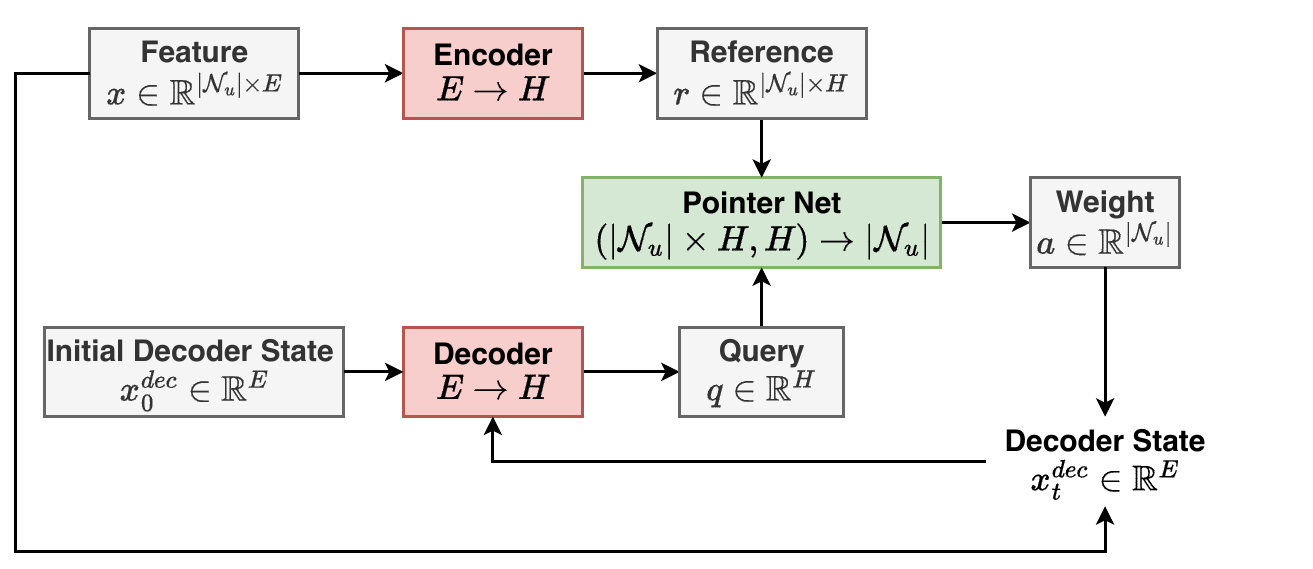}
        \caption{Actor architecture}
        \label{fig:training:actor}
      \end{subfigure}%
      \hfill
      \begin{subfigure}{0.50\linewidth}
        \centering
        \includegraphics[width=\linewidth]{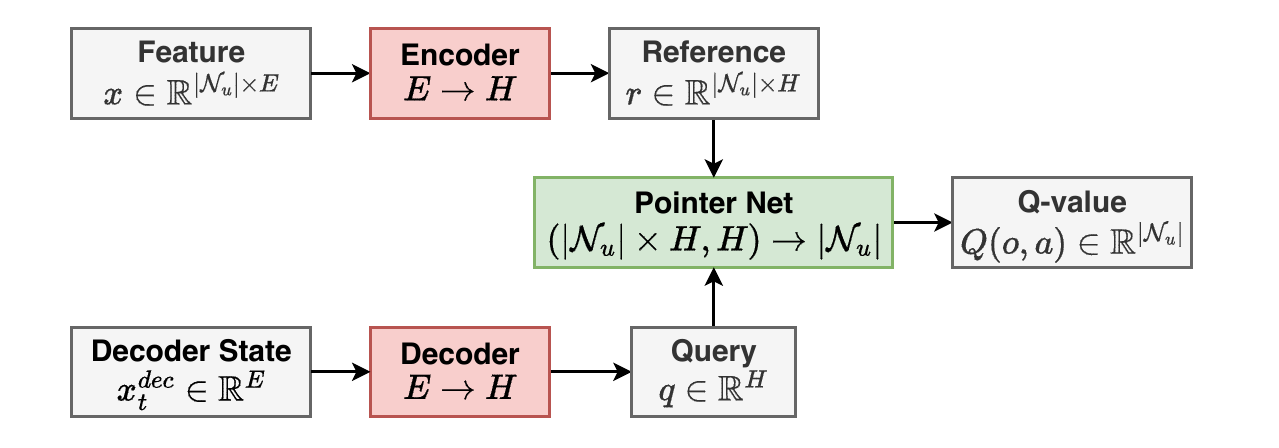}
        \caption{Critic architecture}
        \label{fig:training:critic}
      \end{subfigure}%
      \caption{The actor and critic architectures in \tool{}.}
    \end{figure}

  \subsection{Training procedure}
  \label{sec:training:procedure}

    \autoref{alg:training} presents the training procedure.
    Some functions of \babverifier{} remain (\eg, $\Select$), thus, are hidden for simplicity.
    For each episode, a verification instance $(N, \phi)$ is sampled from the training set (\autoref{line:train-start}) and the initial observation $\mathbf{o}_0$ is extracted (\autoref{line:training:get-observation}).
    At each branching step, the actor samples attention weights $\mathbf{a}_t \sim \pi_\psi(\mathbf{a}|\mathbf{o}_t)$ (\autoref{line:training:attention-weight}), which are combined with heuristic scores $\omega_t$ to select the branching neuron $n^*$ (\autoref{line:training:decision}).
    \textsc{Decide} branches on $n^*$ and returns two successor observations $\mathbf{o}'_{t+1}$, $\bar{\mathbf{o}}'_{t+1}$.
    Next, \textsc{Deduce} runs abstraction on each branch and returns termination masks $m_{t+1}$, $\bar{m}_{t+1}$ indicating which branches are solved.
    The reward $r_t$ penalizes unsolved branches (\autoref{line:training:reward}), and a transition is stored in the replay buffer $\mathcal{D}$ (\autoref{line:training:store-transition}).
    \textsc{Optimize} then samples a batch from $\mathcal{D}$ and updates the actor and critic networks (\autoref{line:update-sac}).
    Two adaptations are needed to handle the dual-branch structure of \babverifier{}.

    \noindent\textbf{Replay buffer.}
      We extend the standard transition~\cite{haarnoja2018soft} to capture both successor subproblems:
      \begin{equation}\label{eq:replay-buffer}
        \mathcal{D} = \{\ldots,\langle \mathbf{o}_t, \mathbf{a}_t, r_t, \mathbf{o}'_{t+1}, \bar{\mathbf{o}}'_{t+1}, m_{t+1}, \bar{m}_{t+1}\rangle, \ldots\}
      \end{equation}
      where $\mathbf{o}'_{t+1}$, $\bar{\mathbf{o}}'_{t+1}$ are the positive and negative branch observations, and $m_{t+1}, \bar{m}_{t+1} \in \{0,1\}$ are termination masks indicating whether each branch is immediately resolved by \textsc{Deduce}.

    \noindent\textbf{Loss function.}
    Since each branching action produces two successor subproblems, the critic loss adapts the Bellman equation to incorporate both branches weighted by their termination masks:
    \begin{align}
      \mathcal{L}_{Q}(\phi) = \mathbb{E}_{\tau_t \sim \mathcal{D}} \left[
        \frac{1}{2} \left(
          Q_\phi(\mathbf{o}_t, \mathbf{a}_t) \right. \right.
      & \left. \left. - \left(
            r_t + \gamma \left(
            m_{t+1} \mathbb{E}_{\mathbf{a} \sim \pi_\psi(\mathbf{a}|\mathbf{o}_{t+1})} \big[Q_{\tilde{\phi}}(\mathbf{o}_{t+1}, \mathbf{a})\big] \right. \right. \right. \right. \nonumber \\
      & \left. \left. \left. \left. + \bar{m}_{t+1} \mathbb{E}_{\mathbf{a} \sim \pi_\psi(\mathbf{a}|\bar{\mathbf{o}}_{t+1})} \big[Q_{\tilde{\phi}}(\bar{\mathbf{o}}_{t+1}, \mathbf{a})\big] \right)
          \right)
        \right)^2
      \right]
      \label{eq:critic-loss-nnv}
    \end{align}
    where $m_{t+1} = 0$ means the branch is solved by bounding step and contributes no future value.
    The target network $Q_{\tilde{\phi}}$ is updated via exponential moving average to stabilize temporal difference target.
    The actor is updated using policy gradients to maximize expected $Q$-values:
    \begin{equation}
      \mathcal{L}_{\pi}(\psi) = \mathbb{E}_{\mathbf{o}_t \sim \mathcal{D}} \left[
        \mathbb{E}_{\mathbf{a} \sim \pi_\psi(\mathbf{a}|\mathbf{o}_t)} \left[
          - Q_\phi(\mathbf{o}_t, \mathbf{a})
        \right]
      \right]
      \label{eq:actor-loss}
    \end{equation}
    Training alternates between updating the critic to improve value estimates and updating the actor to select actions with higher estimated returns.

\section{Evaluation}
\label{sec:results}
  \subsection{Experimental design}
  \label{sec:exp_design}
    \noindent\textbf{Dataset and training.}
      To train \tool{}, we generated 480,960 FNN-based verification instances with varying layers and perturbation radii.
      To select challenging instances that require non-trivial branching\footnote{In fact, out of 2700 instances in regular track of VNN-COMP'25~\cite{kaulen20256th}, 92\% problems (2473/2700) were solved without branching or very few branching steps.}, we applied a two-stage filter using: (i) adversarial attack \textsc{Pgd}~\cite{madry2017towards} to remove SAT instances, and (ii) abstraction \textsc{LiRPA}~\cite{xu2020automatic} to remove easy UNSAT instances.
      This yielded 14,454 challenging instances used for training.
      We train \tool{} for 100,000 episodes with batch size 512 and gradient accumulation over 8 steps (\autoref{tab:train:hyperparameters}) and a 2-layer GCN with 128 hidden dimensions for embeddings.

    \noindent\textbf{Verification problems.}
      We use 600 challenging verification problems across three network architectures (\autoref{tab:benchmarks}).
      In addition to 200 FNN-based instances, we evaluate on 200 CNN-based~\cite{de2021improved} and 200 Sigmoid-based~\cite{kaulen20256th} instances, whose networks are not used in training, for generalization.

      \begin{table}[t]
        \centering
        \small
        \caption{Verification instances.}
        \label{tab:benchmarks}
        \setlength{\tabcolsep}{6pt}
        \vspace{6pt}
        \begin{tabular}{cccccc}
          \toprule
          \textbf{Benchmark} & \textbf{Layers} & \textbf{Types} & \textbf{Neurons} & \textbf{Parameters} & \textbf{Instances} \\
          \midrule
          FNN & 4-6 & FC, ReLU & 0.5K-1.5K & 269K-532K & 200 \\
          CNN & 4-6 & Conv, FC, ReLU & 3.2K-6.8K & 55K-215K & 200 \\
          Sigmoid & 5 & FC, ReLU, Sigmoid & 0.3K & 343K & 200 \\
          \midrule
          \textbf{Total} & & & & & \textbf{600} \\
          \bottomrule
        \end{tabular}
      \end{table}




    \noindent\textbf{Baselines.}
      We compare \tool{} against (i) \fsb{}~\cite{de2021improved}, which estimates bound improvements via Lagrangian Decomposition and is used in \crown{}~\cite{wang2021beta,zhang2022general} (ranked 1st in VNN-COMPs---the annual DNN verification competition to compare different approaches and showcase the latest advances in the field~\cite{bak2021second,brix2023first,kaulen20256th,brix2024fifth});
      and \neuralsat{}~\cite{duong2024harnessing,duong2025neuralsat,duong2025neuralsat2} (2nd in VNN-COMP'24 and '25); (ii) \upb{}~\cite{de2022ibp}, a fast approximation of \fsb{}; and (iii) \polarity{}~\cite{wu2020parallelization}, which uses bound differences directly and is used in \marabou{}~\cite{wu2024marabou} (ranked 2nd in VNN-COMP'23).
      All methods are integrated into \neuralsat{} for a fair comparison.


    \noindent\textbf{Experimental setup.}
      Our experiments were conducted on a Linux machine with an AMD Ryzen Threadripper 32-Core, 128GB RAM, and an NVIDIA GeForce RTX 4090, 24GB VRAM.
      We use timeout 120s per instance, which follows the rules in recent VNN-COMPs~\cite{brix2024fifth,kaulen20256th} (6h per benchmark).


  \subsection{\tool{} performance compared to baselines}
    \label{sec:result_baseline}
    \begin{table}[t]
      \centering
      \small
      \setlength{\tabcolsep}{5pt}
      \caption{Results across benchmarks (instances solved by at least one method).}
      \vspace{6pt}
      \begin{tabular}{c ccc ccc ccc}
        \toprule
        & \multicolumn{3}{c}{\textbf{FNN}} & \multicolumn{3}{c}{\textbf{CNN}} & \multicolumn{3}{c}{\textbf{Sigmoid}} \\
        \midrule
        \textbf{Method}
        & \makecell{\textbf{Time}\\(s)} & \makecell{\textbf{Branch}\\($\times 10^6$)} & \makecell{\textbf{Solved}\\(\#)}
        & \makecell{\textbf{Time}\\(s)} & \makecell{\textbf{Branch}\\($\times 10^6$)} & \makecell{\textbf{Solved}\\(\#)}
        & \makecell{\textbf{Time}\\(s)} & \makecell{\textbf{Branch}\\($\times 10^6$)} & \makecell{\textbf{Solved}\\(\#)} \\
        \cmidrule(lr){1-1} \cmidrule(lr){2-4} \cmidrule(lr){5-7} \cmidrule(lr){8-10}
        \textsc{Polarity} & 6064.32 & 170.81 & 1 & 8662.55 & 583.01 & 0 & 6027.70 & 62.98 & 10 \\
        \textsc{Upb} & 4359.36 & 27.17 & 23 & 2387.48 & 1.73 & 71 & 3963.69 & 8.59 & 39 \\
        \textsc{Fsb} & 2679.00 & 6.65 & 43 & \textbf{2295.89} & 1.81 & 71 & 4336.73 & 8.99 & 34 \\
        \textsc{Rsb} & \textbf{2051.94} & \textbf{1.86} & \textbf{50} & 2419.58 & \textbf{1.69} & \textbf{72} & \textbf{3450.80} & \textbf{5.14} & \textbf{43} \\
        \bottomrule
      \end{tabular}
      \label{tab:eval:size}
    \end{table}

    \autoref{tab:eval:size} presents our results.
    \tool{} solves the most instances across all three benchmarks (165 total vs.\ 148 for \fsb{}, 110 for \upb{}, and 11 for \polarity{}), an overall improvement of 11\% over \fsb{}, while exploring 50\% fewer branches compared to \fsb{}.
    These 11\% gain in solved instances and reduction in branch exploration show that \tool{} allows far fewer splits by selecting ``right'' neurons that tighten bounds faster and prune infeasible subproblems earlier.

    On FNN, \tool{} verifies 50 instances compared to 43 for \fsb{} (16\% more) while reducing total branch exploration by 72\% (1.86M vs.\ 6.65M).
    On Sigmoid, the gains are even larger: 43 vs.\ 34 solved instances (26\% more) with 43\% fewer branches (5.14M vs.\ 8.99M).
    On CNN, the improvement is smaller (72 vs.\ 71 solved, 7\% fewer branches), as CNNs have more neurons (\autoref{tab:benchmarks}) and both methods either timeout or require many branches to verify.
    Still, \tool{} reduces branches and solves one more instance.
    These results show that \tool{} adjusts \fsb{}'s decisions to explore better paths.

    \polarity{} solves only 11 instances in total despite exploring far more branches than \fsb{} and \tool{}. This shows that exploring more branches does not lead to more solved instances and that the branching heuristic has a large impact on verification performance.

    We note that the chosen baselines are already strong heuristics, \eg, \fsb{} being used in leading verifiers such as \crown{}~\cite{wang2021beta}, \gcpcrown{}~\cite{zhang2022general}, and \neuralsat{}~\cite{duong2024harnessing}.
    We consider improving over such well-established baselines is significantly more challenging, and these gains are meaningful progress in pushing the boundaries of DNN verification.
    More importantly, as a learning-based method, \tool{} continues to improve with more training data, better DRL algorithms, and new network architectures, unlike fixed hand-crafted heuristics.

  \subsection{\tool{}'s generalizability}\label{sec:generalizability}
    \begin{figure}[t]
      \centering
      \begin{minipage}[b]{0.5\linewidth}
        \centering
        \begin{subfigure}[t]{0.5\linewidth}
          \centering
          \includegraphics[width=\linewidth]{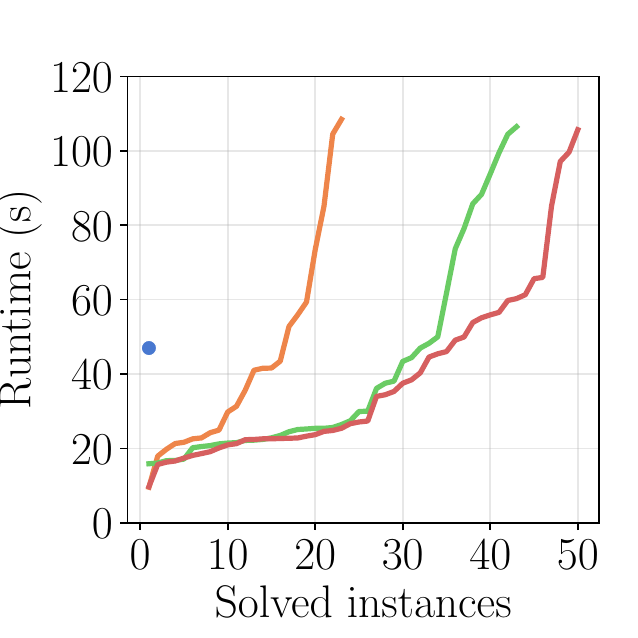}
          \caption{Total runtime.}
          \label{fig:eval:cactus:total_time:seen}
        \end{subfigure}%
        \hfill
        \begin{subfigure}[t]{0.5\linewidth}
          \centering
          \includegraphics[width=\linewidth]{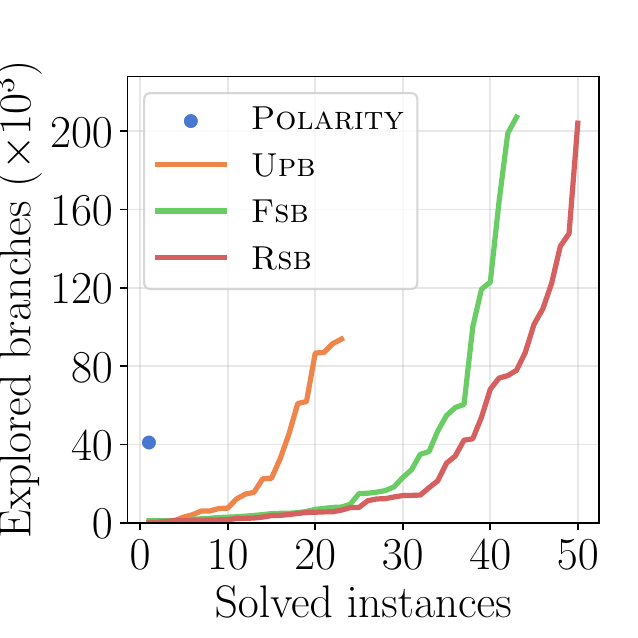}
          \caption{Explored branches.}
          \label{fig:eval:cactus:value:seen}
        \end{subfigure}%
        \caption{Performances on seen networks.}
        \label{fig:eval:cactus:seen}
      \end{minipage}%
      \hfill
      \begin{minipage}[b]{0.5\linewidth}
        \centering
        \begin{subfigure}[t]{0.5\linewidth}
          \centering
          \includegraphics[width=\linewidth]{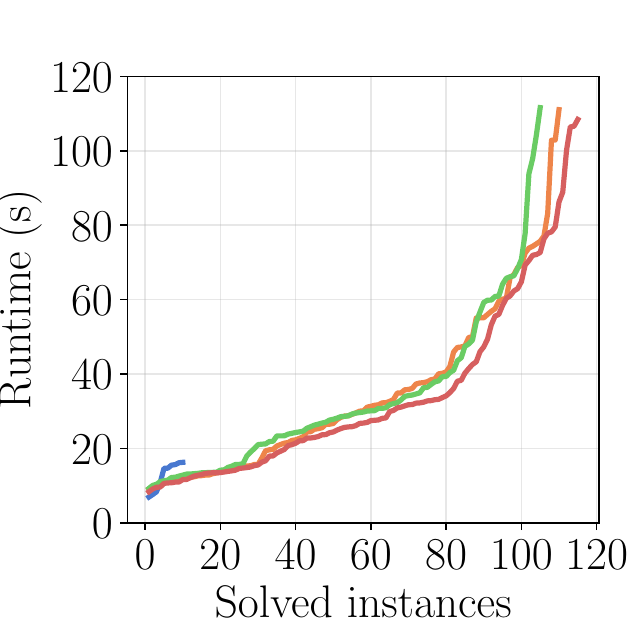}
          \caption{Total runtime.}
          \label{fig:eval:cactus:total_time:unseen}
        \end{subfigure}%
        \hfill
        \begin{subfigure}[t]{0.5\linewidth}
          \centering
          \includegraphics[width=\linewidth]{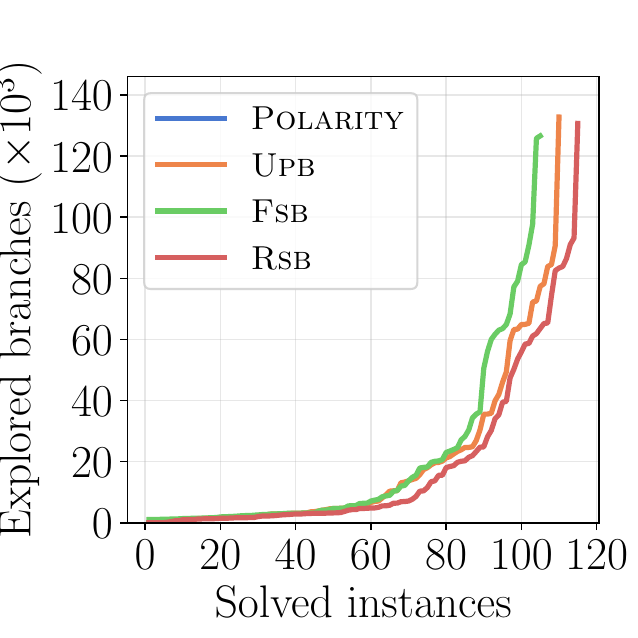}
          \caption{Explored branches.}
          \label{fig:eval:cactus:value:unseen}
        \end{subfigure}%
        \caption{Performances on unseen networks.}
        \label{fig:eval:cactus:unseen}
      \end{minipage}
    \end{figure}

    \autoref{fig:eval:cactus:seen} illustrates the performance of \tool{} on seen networks (FNN).
    Evaluation instances share the same architectures as training instances but are paired with different properties.
    Since a \emph{verification problem} is a (network, property) pair, different properties yield different hidden-layer bounds and different BaB trees---they are distinct problems---and thus train and test sets do not overlap.

    \tool{}'s line (red) extends furthest on the x-axis (50 instances) and runs below \fsb{} throughout both runtimes (\autoref{fig:eval:cactus:total_time:seen}) and explored branches (\autoref{fig:eval:cactus:value:seen}), which shows that \tool{} solves more instances while consistently requiring fewer branches or runtimes.
    As problem difficulty increases, the gap between \tool{} and \fsb{} widens.
    \tool{}'s curve stays lower while \fsb{}'s rises faster, indicating that the learned policy becomes more effective on hard instances where branching order has a larger impact.

    \autoref{fig:eval:cactus:unseen} shows the performances on unseen architectures (CNN and Sigmoid).
    \tool{}'s line extends furthest (115 instances), with \upb{} second (110) and \fsb{} third (105).
    An interesting observation is that \fsb{} performs worse than \upb{} on unseen architectures.
    However, \tool{} adjusts \fsb{}'s scores through learned weights, and its curve remains consistently below \upb{}'s across the entire range, recovering from \fsb{}'s weakness and surpassing all baselines.
    This shows that \tool{} generalizes beyond its training distribution, learning branching strategies that transfer to unseen network architectures.

  \subsection{\tool{}'s efficiency}
    \begin{figure}[t]
      \centering
      \begin{subfigure}[t]{0.3\linewidth}
        \centering
        \includegraphics[width=\linewidth]{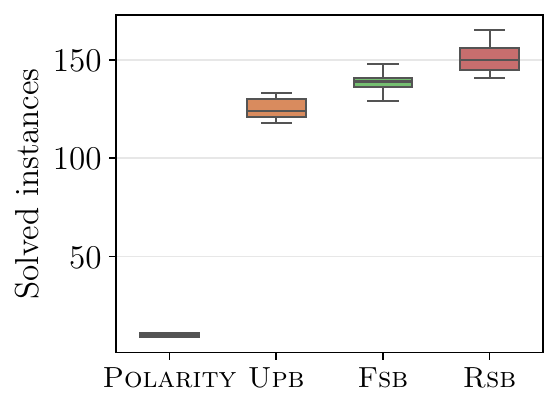}
        \caption{Solved instances.}
        \label{fig:eval:box:solved}
      \end{subfigure}%
      \hfill
      \begin{subfigure}[t]{0.3\linewidth}
        \centering
        \includegraphics[width=\linewidth]{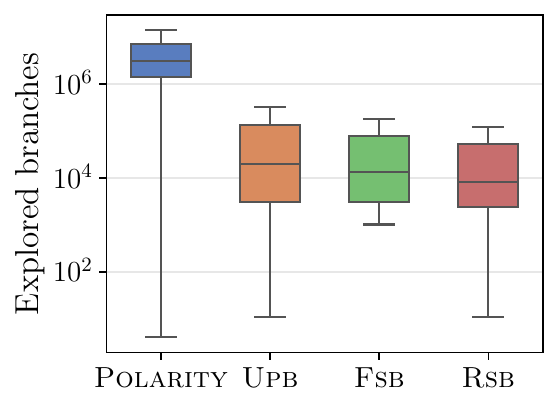}
        \caption{Explored branches.}
        \label{fig:eval:box:branches}
      \end{subfigure}%
      \hfill
      \begin{subfigure}[t]{0.3\linewidth}
        \centering
        \includegraphics[width=\linewidth]{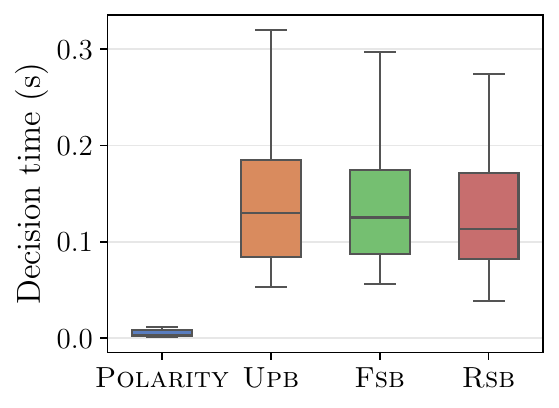}
        \caption{Decision time.}
        \label{fig:eval:box:decision_time}
      \end{subfigure}%
      \caption{Statistical distributions of computational metrics.}
      \label{fig:eval:box}
    \end{figure}

    To assess statistical significance (\autoref{fig:eval:box}), we evaluate each method on five runs that use different seeds for benchmark generation.
    \tool{} solves the most instances across all runs (\autoref{fig:eval:box:solved}), with a median of 150 and a tight interquartile range, compared to 140 for \fsb{} and 125 for \upb{}.
    The narrow spread of \tool{}'s box confirms that its advantage is consistent and not driven by a single run.

    We further show the branch distribution in log scale in \autoref{fig:eval:box:branches}.
    \polarity{} explores orders of magnitude more branches ($\sim$$10^6$) than the rest, yet solves the fewest instances, which clearly indicates that exploring more branches does not compensate for poor branching decisions.
    \tool{} and \fsb{} both have medians near $10^4$, but \tool{}'s box is lower and its spread is tighter, showing that it reaches solutions with fewer branches across instances of varying difficulty.

    \autoref{fig:eval:box:decision_time} shows that \tool{}'s per-step decision time ($\sim$0.11s) is lower than \fsb{} and \upb{} ($\sim$0.13s) despite performing additional DRL inference at every step.
    Both \tool{} and \fsb{} run \fsb{} scoring internally, but \tool{}'s better branching keeps the subproblem queue smaller (\eg, 32 vs.\ 64 active subproblems per batch).
    As a result, \tool{} runs \fsb{} on fewer subproblems per step, and the time saved from the smaller batch offsets the added DRL inference cost.

\section{Related work}
  Abstraction in \babverifier{} plays a key role in scalability, \eg, intervals~\cite{wang2018formal}, zonotopes~\cite{singh2018fast}, polytopes~\cite{singh2019abstract,zhang2018efficient,xu2020automatic}, and star sets~\cite{tran2019star} are widely used by leading verifiers~\cite{brix2023first,brix2024fifth,kaulen20256th}.
  Bound tightening is further strengthened by multi-neuron relaxations~\cite{ferrari2022complete}, cutting planes~\cite{zhang2022general, zhou2024scalable}, and neuron stability~\cite{duong2024harnessing},
  while training for verification reduces the number of unstable neurons in the first place~\cite{xu2024training}.
  \tool{} focuses on the branching decision and complements these approaches. 

  \fsb{}~\cite{de2021improved} is the SOTA heuristic that estimates bound improvements via Lagrangian Decomposition, outperforming \babsr{}, \polarity{}, and GNN heuristic~\cite{lu2020neural}.
  \fsb{} estimates per-neuron branching scores using tighter bounds from the dual problem, making it a strong greedy heuristic.
  \upb{}~\cite{de2022ibp} is an approximation of \fsb{}~\cite{de2021improved} by reusing dual variables computed during the bounding steps~\cite{wang2021beta}.
  \babsr{}~\cite{bunel2020branch} uses an abstraction to approximate bound improvements per neuron while \polarity{}~\cite{wu2020parallelization} selects neurons based solely on their bounds.
  GNN heuristic~\cite{lu2020neural} takes a learning-based approach with bidirectional updates to predict decisions, and is closely related to our work.
  However, these heuristics make greedy decisions, scoring each neuron based on immediate improvements without considering long-term impact on search tree.
  A neuron that maximizes immediate bounds may still produce subproblems that require more branches to resolve, resulting in deep search trees.

  In SAT solving, prior DRL-based approaches~\cite{kurin2020can, wang2021neuroback, zhai2025learning} require a separate GNN forward pass at each decision step, incurring significant overhead.
  The one-shot guidance approach~\cite{tonshoff2025learning} reduces this cost by applying a single GNN forward pass whose predictions guide all subsequent steps, achieving a 2$\times$ speedup and generalizing to larger problems.

  Our work addresses the greedy selections~\cite{bunel2020branch, de2021improved} by generating weights to refine baseline heuristic scores with learned policies.
  Unlike these methods, we use raw neuron features for architecture-agnostic verification (\autoref{sec:training:embed}). 
  We also adapt the training procedure to dual-branch dynamics, which complements the one-shot guidance approach~\cite{tonshoff2025learning} (\autoref{sec:training:procedure}).

\section{Conclusion}
  \label{sec:conclusion}
  We demonstrated that the proposed DRL branching heuristics can effectively learn policies that outperform mature heuristics in DNN verification.
  Experimental results validate that DRL can capture long-term search strategies and outperform greedy heuristics.
  More broadly, learning-based paradigm opens new directions for improving verification efficiency.
  As DRL algorithms advance~\cite{schulman2017proximal,haarnoja2018soft,levine2020offline,le2024multitask} and training data grows, performance can continue to improve.
  Future work includes extending \tool{} to input-space branching~\cite{wang2018formal} for low-dimensional inputs and online adaptation during verification~\cite{bello2016neural} to enhance the performance of the learned policy on hard instances. 

\bibliographystyle{unsrtnat}
\bibliography{paper}

\appendix

\section{The BaB algorithm}\label{apdx:bab_algorithm}
  \begin{algorithm}[htbp]
    \small
    \caption{The \babverifier{} algorithm}
    \label{alg:babnnv}
    \Input{DNN $N$, property $\phi_{in} \Rightarrow \phi_{out}$}
    \Output{\unsat{} if property is valid, otherwise \sat{}}
    \BlankLine
    $\problems \leftarrow \{ \emptyset \}$ \\
    \While{$\problems \neq \emptyset$}{
      $\sigma \gets \Select(\problems)$ \\
      \If{\Deduce($N, \phi_{in}, \phi_{out}, \sigma$)}{ \label{line:deduce}
        $(\cex, v_i) \leftarrow \Decide(N, \phi_{in}, \phi_{out}, \sigma)$ \label{line:decide} \\
        \If{$\cex$}{
          \Return{\sat{}}
        }
        $\problems \leftarrow \problems \cup \{ \sigma \land v_i ~;~ \sigma \land \overline{v_i} \}$ \\
      }
    }
    \Return{\unsat{}}
  \end{algorithm}

  \autoref{alg:babnnv} shows \babverifier{}, a reference architecture~\cite{nakagawa2014consolidating} for modern DNN verifiers.
  \babverifier{} takes as input a DNN $N$ and a property $\phi_{in}\Rightarrow \phi_{out}$.
  \babverifier{} iterates between two components: \Decide{} (branching, \autoref{line:decide}) assigns active/inactive status for a neuron, and \Deduce{} (bounding, \autoref{line:deduce}) checks the feasibility of the current subproblem.
  Both components are critical to verification efficiency, but this work focuses on improving the \Decide{}.

\section{DRL details}\label{apdx:drl}
  We use a variant of soft actor-critic (SAC)~\cite{haarnoja2018soft} including two $Q$-networks and target $Q$-networks supplying bootstrapped value targets, the actor generates a distribution over candidate neurons, which yields policy updates close to deterministic policy-gradient methods but retains the off-policy sample reuse of SAC.

  \paragraph{Value objective.}
  The networks are trained to maximize the expected cumulative discounted return, summarized the role of each module via the state-value identity
  \begin{equation}
    V(s_t) = \mathbb{E}_{a_t \sim \pi}[Q_\phi(s_t, a_t)]
  \end{equation}

  \paragraph{Critic update.}
  The critic is trained using temporal difference learning to minimize the prediction error of value estimates:
  \begin{equation}
    \mathcal{L}_{Q}(\phi) = \mathbb{E}_{(s_t, a_t) \sim \mathcal{D}} \left[
      \frac{1}{2} \Big(
        Q_\phi(s_t, a_t) - \big(
          R(s_t, a_t) + \gamma \, \mathbb{E}_{s_{t+1} \sim P} [V_{\bar{\phi}}(s_{t+1})]
        \big)
      \Big)^2
    \right]
  \end{equation}
  where $\mathcal{D}$ is the replay buffer of collected transitions $\langle s_t, a_t, r_t, s_{t+1} \rangle$.
  The replay buffer enables off-policy learning by storing and reusing past experiences, which improves sample efficiency and stabilizes training.
  The discount factor $\gamma \in [0,1]$ controls the trade-off between immediate and future rewards.
  In particular, $\gamma = 0$ focuses only on immediate rewards, while $\gamma$ close to 1 encourages the agent to consider long-term consequences.

  \paragraph{Actor update.}
  The actor is updated using policy gradients:
  \begin{equation}
    \mathcal{L}_{\pi}(\psi) = \mathbb{E}_{s_t \sim \mathcal{D}} \left[
      \mathbb{E}_{a_t \sim \pi_\psi} \left[
        - Q_\phi(s_t, a_t)
      \right]
    \right]
  \end{equation}
  The actor minimizes this loss to guide action selection toward higher expected returns, as estimated by the critic.
  The critic learns to estimate action values from collected experience, while the actor uses those estimates to take better actions.

\section{Training details}\label{apdx:training}

\subsection{Training setup}

\paragraph{Replay, rewards, and buffer capacity.}
Specifications are listed in \autoref{tab:train:hyperparameters}.
Bounded replay limits memory usage while retaining diverse branching histories from previous problems.
When the buffer reaches capacity, the oldest transitions are removed in a first-in, first-out manner, balancing recent and diverse off-policy data.
Rewards are normalized by a fixed maximum number of subproblems that an episode may visit, assuring that return scales are comparable across instances with varying search-tree sizes.

\paragraph{Warm-up.}
During the initial $n_{\mathrm{start}}$ environment steps (branching steps summed over episodes), actions are selected using lightweight hand-crafted heuristics (\eg, \fsb{}, \random{}, etc.) instead of the learned policy.
This approach guarantees that early replay can be contained.
Optimization may also begin once the buffer is at least half full, even if $n_{\mathrm{start}}$ hasn't been reached.
After warm-up, letting $g$ index environment steps, we still take a heuristic action occasionally (\eg, every 5 steps).
Otherwise, we follow the actor.

\paragraph{Optimization choices (\autoref{tab:train:hyperparameters}).}
The actor and critic use learning rates on the order of $10^{-4}$, a scale commonly employed in off-policy actor-critic methods for long episodes.
The discount factor $\gamma$ is set to $0.99$, making sure that future branching depth is not excessively discounted relative to early decisions.
Minibatches contain a few hundred transitions.
Gradient accumulation across several draws per update approximates a larger batch and stabilizes the critic targets under GPU memory restrictions.
Multiple critic (Bellman) updates are applied before each actor update, allowing $Q$-values to more closely track the evolving policy.
Twin $Q$-networks and pessimistic targets are used to mitigate overestimation in value backups.
Target networks are updated using Polyak averaging with a coefficient $\tau = 10^{-4}$, resulting in slowly moving bootstrap targets.


\paragraph{Network architecture.}
The actor and critic share the same observations (\autoref{sec:training:embed}) but use separate PointerNet-based networks~\cite{bello2017neural}.
Each network applies linear projections to raw and embedded neuron features, an LSTM encoder and decoder, $n_{\mathrm{glimpse}}$ attention glimpses, and logits clipped to $[\pm10]$ before applying softmax over admissible neurons.
The critic outputs one scalar per branch-child pair using the dual-branch observation (\autoref{sec:training:embed}).

\begin{table}[t]
  \centering
\caption{Hyperparameters for RL training and for GCN embedding pretraining used in our experiments. The episode count equals the main paper's ($10^5$ unless stated otherwise).}
\label{tab:train:hyperparameters}\label{tab:train:hyperparameter}
\begin{tabular}{l l}
  \toprule
  \textbf{Hyperparameter} & \textbf{Value} \\
  \midrule
  \multicolumn{2}{@{}l}{\emph{RL training (Actor and Critic)}} \\
  Optimizer & AdamW \\
  Actor learning rate & $4\times 10^{-4}$ \\
  Critic learning rate & $4\times 10^{-4}$ \\
  Discount factor $\gamma$ & $0.99$ \\
  Replay buffer capacity & $10^5$ transitions \\
  Minibatch size & $512$ \\
  Gradient accumulation & $8$ minibatches per gradient step \\
  Critic updates per policy update & $2$ \\
  Warm-up step $n_{\mathrm{start}}$ & $10^3$ steps \\
  Soft target update $\tau$ & $10^{-4}$ \\
  Target network update & 1 step \\
  Training episodes & $10^5$ \\
  Checkpoint period & $10^4$ steps \\
  \midrule
  \multicolumn{2}{@{}l}{\emph{GCN embedding (before RL training)}} \\
  Architecture & 2-layer GCN, hidden width $128$, ReLU \\
  Node feature & $4$ (lower bound, upper bound, bias, unstable mask) \\
  Supervision & MSE on \fsb{} scores of unstable neurons \\
  Optimizer & Adam \\
  Learning rate & $3\times 10^{-4}$ \\
  Epochs & $5$ \\
  Mini-batch size & $512$ \\
  \bottomrule
\end{tabular}
\end{table}


\subsection{DRL training progress}

\begin{figure}[t]
  \centering
  \includegraphics[width=0.6\textwidth]{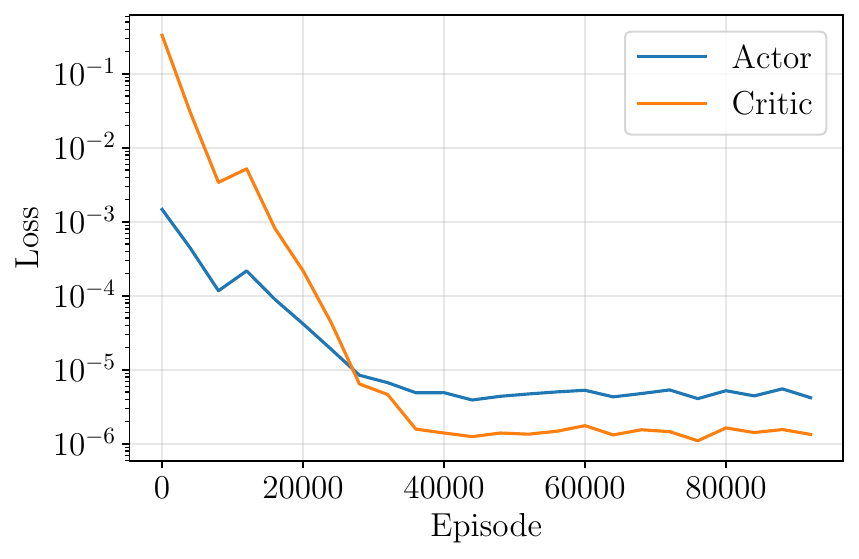}
  \caption{Training loss over time.}
  \label{fig:train:loss}
\end{figure}

\autoref{fig:train:loss} shows actor/critic losses alongside training; both plateau without divergence after roughly $10^6$ environment steps in our recorded run.
Optional checks, such as periodically running a fixed verification suite to measure branch counts during training, provide a closer proxy to deployment and match our checkpoint schedule.
Training on a single GPU for approximately $10^5$ verification episodes (batch size $512$ with gradient accumulation $8$) requires several days.
After training, model weights are frozen and shared across all test instances without per-network fine-tuning.
Wall-clock time comparisons under competition time budgets are reported in the main text alongside branch counts.

\end{document}